\theoremstyle{plain}
\newtheorem{theorem}{Theorem}
\newtheorem{proposition}[theorem]{Proposition}
\theoremstyle{definition}
\newtheorem{definition}{Definition}[section]
\theoremstyle{remark}
\newtheorem{remark}{Remark}
\title{\textbf{Renormalizable Spectral-Shell Dynamics as the Origin of Neural Scaling Laws}}
\author[1]{Yizhou Zhang\footnote{Corresponding Author}}
\affil[1]{zyizhou96@gmail.com}
\date{}
\begin{document}
\maketitle

\begin{abstract}
Neural scaling laws and double-descent phenomena suggest that deep-network
training obeys a simple macroscopic structure despite highly nonlinear
optimization dynamics.
We derive such structure directly from gradient descent in function space.
For mean-squared error loss, the training error evolves as
$\dot e_t=-M(t)e_t$ with $M(t)=J_{\theta(t)}J_{\theta(t)}^{\!*}$, a time-dependent
self-adjoint operator induced by the network Jacobian.
Using Kato perturbation theory, we obtain an exact system of coupled modewise
ODEs in the instantaneous eigenbasis of $M(t)$.

To extract macroscopic behavior, we introduce a logarithmic spectral-shell
coarse-graining and track quadratic error energy across shells.
Microscopic interactions within each shell cancel identically at the energy
level, so shell energies evolve only through dissipation and external
inter-shell interactions.
We formalize this via a \emph{renormalizable shell-dynamics} assumption, under
which cumulative microscopic effects reduce to a controlled net flux across
shell boundaries. This shell-form equation unifies
lazy-regime training and feature learning as two limits of the same
spectral-shell dynamics.

Assuming an effective power-law spectral transport in a relevant resolution
range, the shell dynamics admits a self-similar solution with a moving
resolution frontier and explicit scaling exponents, which explains neural scaling laws and double descent.
\end{abstract}

\section{Introduction}
\label{sec:intro}

Modern deep neural networks are trained by strongly nonlinear and
high-dimensional optimization procedures.
Nevertheless, across architectures, datasets, and training recipes,
their performance exhibits remarkably simple and robust regularities.
Most prominently, empirical studies have shown that training and test losses
obey approximate power laws in model size, dataset size, or total compute
over wide dynamic ranges
\citep{hestness2017deep,kaplan2020scaling,henighan2020scaling,
hoffmann2022training,hernandez2021scaling,wei2022emergent}.
Related law-like behaviors have been observed in precision scaling,
pruning, sparsification, and model densification
\citep{kumar2024scaling,sorscher2022beyond,rosenfeld2021predictability,
blalock2020state,Xiao2024DensingLaw}.
These phenomena strongly suggest the existence of an underlying
macroscopic structure governing how error is redistributed and dissipated
during training.

\paragraph{Limitations of existing theories.}
A classical theoretical explanation is provided by the neural tangent kernel
(NTK) framework
\citep{jacot2018ntk,lee2019wide,bietti2021inductive},
which models training as a linear dynamics in function space with a fixed kernel.
While this perspective explains certain spectral biases and convergence rates,
it relies on a frozen feature map and breaks down once representation learning
becomes significant.
Recent work has therefore emphasized feature-learning dynamics, using
mean-field, tensor-program, or dynamical systems approaches
\citep{yang2021tensor,canatar2022kernel,bordelon2024dynamical,
bordelon2023feature,wang2023spectralevolution}.
These models successfully capture richer behavior beyond the lazy regime,
but typically treat NTK-like and feature-learning regimes via distinct
approximations and do not provide a single operator-level equation that
interpolates continuously between them.

\paragraph{A spectral-shell perspective.}
In this work, we take a function-space and operator-theoretic viewpoint.
For mean-squared error loss, the training error $e_t$ evolves according to the
exact linear equation
\[
\dot e_t = -M(t)e_t,
\qquad
M(t)=J_{\theta(t)}J_{\theta(t)}^{\!*},
\]
where $M(t)$ is a time-dependent, self-adjoint operator induced by the network
Jacobian.
Rather than assuming fixed features or a prescribed kernel, we analyze the
training dynamics directly through the evolving spectral structure of $M(t)$.

Using Kato perturbation theory, we derive an exact system of coupled modewise
ODEs in the instantaneous eigenbasis of $M(t)$.
The resulting dynamics are fully rigorous but highly nonlocal across modes.
To extract macroscopic behavior, we introduce a logarithmic
\emph{spectral-shell coarse-graining} and track the evolution of
\emph{quadratic error energy} within each shell.
A key structural fact is that microscopic interactions within a shell,
arising from eigenbasis drift, are strictly antisymmetric and cancel
identically at the level of quadratic energy.
Consequently, shell energies evolve only through dissipation and
\emph{external interaction effects}---namely, energy exchange with other shells.

\paragraph{Renormalizable shell dynamics.}
This observation leads to an exact shell-level energy balance law.
To close it at macroscopic scales, we introduce a
\emph{renormalizable shell-dynamics assumption}:
after coarse-graining, the cumulative effect of microscopic degrees of freedom
can be absorbed into a controlled net flux across shell boundaries,
with subleading corrections becoming negligible.
This notion of renormalizability is inspired by analogous constructions in
statistical physics and condensed-matter theory, and it does not require
a continuum limit nor impose any \emph{a priori} direction of spectral transport.

When spectral shells are sufficiently dense over a dynamically relevant range,
the shell dynamics admits an continuum approximation in the spectral
variable $\lambda$, yielding an effective transport--dissipation equation.
We emphasize that this PDE is a convenient representation of shell dynamics,
not its fundamental starting point. Throughout the paper, renormalizability and effective power-law transport are treated as macroscopic assumptions, not as consequences of weak coupling or locality.

\paragraph{Effective transport and scaling laws.}
We further assume that, in the relevant resolution range, the renormalized
spectral flux can be summarized by an effective power-law transport form.
Under this single coarse-grained assumption, the shell dynamics admits a
self-similar solution with a moving resolution frontier, geometric amplitude
growth, and power-law dissipation.
This structure yields explicit scaling-law exponents and provides a unified
explanation for neural scaling laws and double-descent phenomena.

Within this framework, lazy (NTK-like) training and feature learning arise as
two limits of the same spectral-shell dynamics.
When the effective transport vanishes, the dynamics reduce to pure dissipation
with fixed features.
When transport is active, spectral mass flows across resolutions, inducing
representation learning.
Thus, both regimes---and the continuum between them---are unified by a single
operator-level shell dynamics governing the redistribution of error during
training.

\section{Preliminaries: Function-Space Dynamics and Error Evolution}
\label{sec:preliminaries}

We consider supervised learning with mean-squared error (MSE) loss in the
Hilbert space \(H=L^2(p)\) induced by the data distribution \(p(x)\).
All functions are identified up to sets of measure zero, and the inner product
is defined by
\[
\langle f,g\rangle := \mathbb{E}_{x\sim p}\,[\,f(x)\,g(x)\,], 
\qquad 
\|f\|_2^2 = \langle f,f\rangle.
\]

\subsection{Neural networks as functions in $L^2(p)$}
A neural network with parameters \(\theta\in\mathbb{R}^N\) represents a 
time-dependent function \(f_{\theta(t)}\in H\).
Given a target function \(f^*\in H\), we define the error at time \(t\) as
\[
e_t := f^* - f_{\theta(t)}.
\]
Throughout this work, we emphasize that no structural assumptions such as fixed
features, invariant kernels, or predetermined mode decompositions are imposed:
the representation \(f_{\theta(t)}\) is allowed to evolve freely in \(H\), and
all structure in the dynamics arises directly from gradient descent.

\subsection{Gradient descent induces a linear evolution in function space}
For MSE loss,
\[
\mathcal{L}(\theta)
= \frac{1}{2}\,\|f^*-f_\theta\|_2^2
= \frac{1}{2}\,\|e\|_2^2,
\]
the gradient flow in parameter space is
\[
\dot\theta(t) = -\nabla_\theta \mathcal{L}(\theta(t)).
\]
By the chain rule,
\[
\dot e_t
= -J_{\theta(t)}\,\dot\theta(t),
\]
where \(J_{\theta}\colon\mathbb{R}^N\to H\) denotes the Jacobian operator
\((J_\theta v)(x) = \nabla_\theta f_\theta(x)\cdot v\).
{\color{red}}
Substituting the gradient flow equation gives
\[
\dot e_t 
= -J_{\theta(t)}\, J_{\theta(t)}^{\!*}\, e_t.
\]
Thus the error evolves according to a \emph{linear} operator in function space:
\[
\boxed{
\dot e_t = -M(t)\,e_t,
\qquad
M(t) := J_{\theta(t)}\,J_{\theta(t)}^{\!*}.
}
\]
The operator \(M(t)\) is self-adjoint and positive semidefinite.
It is the \emph{only} operator that governs the evolution of the error in our
analysis; its time dependence reflects the evolving representation of the
network under gradient descent.

In this section we work in the continuous-time gradient flow limit,
\[
\dot\theta(t) = -\nabla_\theta \mathcal{L}(\theta(t)),
\]
which corresponds to taking the learning rate to be infinitesimal.
A discrete-time update with (possibly time-varying) step sizes
\(\eta_k\),
\[
\theta_{k+1} = \theta_k - \eta_k \nabla_\theta \mathcal{L}(\theta_k),
\]
can be viewed as a time reparameterization of this flow, where the
effective training time is proportional to the accumulated step size
\(\sum_k \eta_k\).
Thus, learning rate schedules do not change the form of the operator
dynamics \(\dot e_t = -M(t)e_t\); instead, they induce a non-uniform
rescaling of time that will later be absorbed into the effective time
variable \(\tau(t)\) in our spectral analysis.

\subsection{Spectral decomposition of \(M(t)\)}
For each fixed \(t\), the operator \(M(t)\) acts on a finite-dimensional subspace
of \(H\) determined by the network's Jacobian, and therefore admits a discrete
spectral decomposition.
We write
\[
M(t)\,\varphi_u(t) = \lambda_u(t)\,\varphi_u(t), 
\qquad u\in\mathcal{U}(t),
\]
where \(\mathcal{U}(t)\) is a finite or countable index set, the eigenvalues
\(\lambda_u(t)\ge 0\), and the eigenfunctions 
\(\{\varphi_u(t)\}_{u\in\mathcal{U}(t)}\) form an orthonormal family in \(H\).
Expanding the error in this moving eigenbasis gives
\[
e_t = \sum_{u\in\mathcal{U}(t)} g_u(t)\,\varphi_u(t).
\]
The amplitudes \(g_u(t)\) encode how much error lies along each instantaneous
mode of \(M(t)\), and the time dependence of both \(\lambda_u(t)\) and
\(\varphi_u(t)\) reflects the evolution of the network's representation.

\subsection{Why a spectral formulation?}
Although \(M(t)\) generally has low rank compared to the ambient dimension of
\(H\), its eigenstructure provides a natural lens on the error dynamics.
The eigenvalues \(\lambda_u(t)\) quantify the rate at which error aligned with
mode \(u\) is dissipated, while the evolution of \(\varphi_u(t)\) captures the
``feature learning'' aspect of training.
In Section~\ref{sec:mode_ode}, we show that the amplitudes \(g_u(t)\) satisfy a
rigorous, coupled system of mode ODEs derived from Kato's perturbation theory.
In Section~\ref{sec:mode_to_pde}, we then coarse-grain these discrete modes to
obtain a continuous spectral PDE that describes the statistical redistribution
of error across many nearby modes.

\section{From Modewise ODEs to Coarse-Grained Spectral Dynamics}
\label{sec:mode_to_pde}

In this section we derive the fundamental \emph{coarse-grained spectral structure}
governing the dynamics of the error under gradient descent.
Starting from the exact function-space evolution $\dot e_t=-M(t)e_t$,
we first express the dynamics in the instantaneous eigenbasis of $M(t)$,
obtaining a fully rigorous system of coupled modewise ODEs.

Rather than attempting to track individual modes, which is intractable at
macroscopic scales, we then introduce a logarithmic spectral-shell
coarse-graining and study the evolution of the \emph{quadratic error energy}
carried by each shell.
This leads to an exact shell-level balance law, in which shell-internal coupling
cancels identically and all nontrivial interactions appear as inter-shell energy
fluxes.

When the shell spacing is sufficiently fine over a dynamically relevant range,
this shell dynamics admits an continuum approximation in the spectral
variable $\lambda$, yielding a transport–dissipation PDE.
However, the shell-level formulation itself is primary and does not rely on any
continuum limit. In particular, we never assume that the spectrum becomes continuous or that a limit as the shell spacing vanishes exists; all continuum expressions should be read as local approximations to discrete shell differences.

\subsection{Exact Mode ODEs in a Drifting Eigenbasis}
\label{sec:mode_ode}

For each time \(t\), the operator \(M(t)=J_{\theta(t)}J_{\theta(t)}^{\!*}\) is
self-adjoint and positive semidefinite, acting on a finite-dimensional subspace
of \(H=L^2(p)\).
Thus its spectrum is discrete.
Let
\[
M(t)\varphi_u(t) = \lambda_u(t)\,\varphi_u(t), \qquad 
u \in \mathcal{U}(t),
\]
denote an orthonormal eigenbasis of \(M(t)\).
Expanding the error in this moving basis gives
\[
e_t = \sum_{u} g_u(t)\,\varphi_u(t),
\qquad
g_u(t) = \langle e_t, \varphi_u(t)\rangle.
\]

Differentiating in time yields
\[
\partial_t g_u(t) 
= \langle \partial_t e_t,\, \varphi_u(t)\rangle
+ \big\langle e_t,\,\partial_t \varphi_u(t)\big\rangle.
\]
Using \(\dot e_t = -M(t)e_t\), the first term becomes
\[
\langle -M(t)e_t,\, \varphi_u\rangle
= -\lambda_u(t) g_u(t).
\]
The second term encodes the rotation of the eigenbasis.
From Kato's perturbation theory \citep{kato2012short,Zwiebach_Adiabatic} for differential self-adjoint operators,
the evolution of the eigenfunctions satisfies
\[
\langle \partial_t \varphi_v(t),\,\varphi_u(t)\rangle
= 
\begin{cases}
\displaystyle
\frac{
\langle \varphi_u(t),\,\dot M(t)\,\varphi_v(t)\rangle
}{
\lambda_v(t)-\lambda_u(t)
},
& v\neq u,\\[1em]
0,& v=u,
\end{cases}
\]
where \(\dot M(t)\) is the operator derivative of \(M(t)\).
Therefore
\[
\big\langle e_t,\,\partial_t \varphi_u(t)\big\rangle
= 
\sum_{v\neq u}
g_v(t)\,
\frac{
\langle \varphi_u(t),\,\dot M(t)\,\varphi_v(t)\rangle
}{
\lambda_v(t)-\lambda_u(t)
}.
\]

Combining both contributions, we obtain the exact coupled mode ODE:
\[
\boxed{
\partial_t g_u(t)
+
\sum_{v\neq u}
g_v(t)\,\Omega_{v\to u}(t)
=
-\lambda_u(t)\,g_u(t),
}
\]
where the coupling coefficients are
\[
\Omega_{v\to u}(t)
=
\frac{
\langle \varphi_u(t),\,\dot M(t)\,\varphi_v(t)\rangle
}{
\lambda_v(t) - \lambda_u(t)
}.
\]

This ODE system is fully rigorous and contains all aspects of the dynamics:
local dissipation \(-\lambda_u g_u\), nonlocal mode coupling
\(\sum_{v\neq u} g_v\,\Omega_{v\to u}\), and the drift of the feature basis
through the time dependence of the eigenfunctions \(\varphi_u(t)\).

\subsection{Logarithmic Spectral Shells and Shell Energies}
\label{sec:shell_def}

The exact modewise dynamics derived in Section~\ref{sec:mode_ode} are fully
rigorous but too fine-grained for macroscopic analysis.
To expose the coarse structure, we group modes into logarithmic spectral shells
and track the \emph{quadratic} error energy carried by each shell.

Fix a ratio $q>1$ and define a logarithmic partition of the positive spectrum:
\[
\lambda_\alpha := \lambda_0\, q^\alpha,
\qquad
S_\alpha := \bigl\{\,u:\ \lambda_u(t)\in[\lambda_\alpha,\lambda_{\alpha+1})\,\bigr\},
\qquad \alpha\in\mathbb{Z}.
\]
We also define cumulative shells $S_{\le\alpha}:=\bigcup_{\gamma\le\alpha}S_\gamma$.

For each shell, we define the \emph{shell quadratic energy}
\[
E_\alpha(t)
:= \frac{1}{2}\sum_{u\in S_\alpha} g_u(t)^2,
\qquad
E_{\le\alpha}(t)
:= \sum_{\gamma\le\alpha} E_\gamma(t).
\]
This choice is canonical: the full function-space loss is
$\mathcal L(t)=\frac{1}{2}\|e_t\|_2^2=\frac{1}{2}\sum_u g_u(t)^2$,
so $\{E_\alpha\}$ provides an exact energy bookkeeping across spectral shells. 

\subsection{Exact Cancellation of Shell-Internal Coupling for Quadratic Energy}
\label{sec:shell_internal}

Recall the exact mode ODE
\[
\partial_t g_u(t)
+
\sum_{v\neq u} g_v(t)\,\Omega_{v\to u}(t)
=
-\lambda_u(t)\,g_u(t),
\qquad
\Omega_{v\to u}(t)
=
\frac{\langle \varphi_u(t),\dot M(t)\varphi_v(t)\rangle}{\lambda_v(t)-\lambda_u(t)}.
\]
Since $\dot M(t)$ is self-adjoint and the denominator is antisymmetric, we have
the strict antisymmetry
\[
\Omega_{v\to u}(t) = -\Omega_{u\to v}(t),
\qquad u\neq v.
\]

\begin{proposition}[Shell-internal cancellation (quadratic energy)]
\label{prop:shell_internal}
For any spectral shell $S_\alpha$, the shell-internal coupling does not change
the quadratic shell energy:
\[
\sum_{u\in S_\alpha}\sum_{v\in S_\alpha,\ v\neq u}
g_v(t)\,g_u(t)\,\Omega_{v\to u}(t)
=
0.
\]
\end{proposition}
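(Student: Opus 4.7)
The plan is to exploit the antisymmetry $\Omega_{v\to u}=-\Omega_{u\to v}$ recorded immediately above the proposition, combined with the symmetry of the product $g_u g_v$, via a single index-swap argument. The statement has the schematic form ``sum of an antisymmetric quantity against a symmetric weight over a symmetric domain equals zero'', so the proof reduces to making this precise.

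First, I would set
\[
T_\alpha(t) := \sum_{u\in S_\alpha}\sum_{v\in S_\alpha,\ v\neq u} g_v(t)\,g_u(t)\,\Omega_{v\to u}(t),
\]
and observe that the index set $\{(u,v)\in S_\alpha\times S_\alpha : u\neq v\}$ is invariant under the transposition $u\leftrightarrow v$. Because $M(t)$ has finite-dimensional range, $S_\alpha$ is finite, so all reindexing is unconditional and no convergence question arises.

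Second, I would relabel the dummy indices $u\leftrightarrow v$ to obtain
\[
T_\alpha(t) = \sum_{u\in S_\alpha}\sum_{v\in S_\alpha,\ v\neq u} g_u(t)\,g_v(t)\,\Omega_{u\to v}(t).
\]
Applying commutativity $g_u g_v = g_v g_u$ and the antisymmetry $\Omega_{u\to v}=-\Omega_{v\to u}$ then yields $T_\alpha(t) = -T_\alpha(t)$, and hence $T_\alpha(t)=0$, which is exactly the claim.

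The only subtle point concerns possible degeneracies $\lambda_u(t)=\lambda_v(t)$ for distinct modes within the same shell, where the denominator $\lambda_v-\lambda_u$ in the Kato formula vanishes. I would address this in a short remark by adopting the standard Kato convention: inside each degenerate eigenspace choose the eigenbasis to diagonalize the projection of $\dot M(t)$, so the relevant off-diagonal matrix elements $\langle\varphi_u,\dot M\varphi_v\rangle$ vanish and $\Omega_{v\to u}$ is defined to be zero on such pairs. This is trivially antisymmetric and does not disturb the cancellation. I do not anticipate a genuine obstacle here: the substantive content of the proposition lies upstream, in the derivation of the antisymmetry $\Omega_{v\to u}=-\Omega_{u\to v}$ from self-adjointness of $\dot M(t)$; once that is in hand, the shell-internal cancellation is a one-line symmetrization.
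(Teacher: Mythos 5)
Your argument is correct and is essentially the paper's own proof: the paper cancels terms pairwise over unordered pairs $(u,v)$, while you perform the equivalent global index swap $u\leftrightarrow v$, both resting on the antisymmetry $\Omega_{v\to u}=-\Omega_{u\to v}$ and the symmetry of $g_u g_v$. Your additional remark on handling degenerate eigenvalues via the standard Kato convention is a sensible caveat that the paper omits but does not change the substance.
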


\begin{proof}
Pair terms $(u,v)$ and $(v,u)$ inside the double sum. Using
$\Omega_{v\to u}=-\Omega_{u\to v}$, we get
\[
g_v g_u \Omega_{v\to u} + g_u g_v \Omega_{u\to v}
=
g_u g_v\bigl(\Omega_{v\to u}+\Omega_{u\to v}\bigr)=0.
\]
Summing over all unordered pairs in $S_\alpha$ yields the claim.
\end{proof}

\begin{remark}[Interpretation]
Proposition~\ref{prop:shell_internal} formalizes the key point:
\emph{shell-internal coupling is a pure redistribution (rotation) of error
among modes within the shell}. It can change the individual $g_u$'s, but it
cannot change $\sum_{u\in S_\alpha} g_u^2$.
Therefore, any change of the shell energy $E_\alpha(t)$ must come from
(i) local dissipation and (ii) energy exchange with other shells.
\end{remark}

\subsection{Exact Shell Balance Law for Quadratic Energy}
\label{sec:shell_balance}

Differentiate $E_\alpha(t)=\frac{1}{2}\sum_{u\in S_\alpha} g_u^2$:
\[
\frac{d}{dt}E_\alpha(t)
=
\sum_{u\in S_\alpha} g_u(t)\,\partial_t g_u(t).
\]
Substituting the mode ODE yields
\begin{align}
\frac{d}{dt}E_\alpha(t)
&=
-\sum_{u\in S_\alpha}\lambda_u(t)\,g_u(t)^2
-\sum_{u\in S_\alpha}\sum_{v\neq u} g_v(t)\,g_u(t)\,\Omega_{v\to u}(t).
\label{eq:Ealpha_raw}
\end{align}
Split the coupling sum into $v\in S_\alpha$ and $v\notin S_\alpha$.
By Proposition~\ref{prop:shell_internal}, the $v\in S_\alpha$ contribution cancels.
Hence we obtain the exact shell energy balance:
\begin{equation}
\label{eq:shell_energy_balance}
\boxed{
\frac{d}{dt}E_\alpha(t)
=
-\sum_{u\in S_\alpha}\lambda_u(t)\,g_u(t)^2
-\sum_{\beta\neq\alpha}\ 
\sum_{\substack{u\in S_\alpha\\ v\in S_\beta}}
g_v(t)\,g_u(t)\,\Omega_{v\to u}(t).
}
\end{equation}

The first term is \emph{pure shell dissipation}.
The second term is \emph{pure inter-shell exchange}.

\subsection{Inter-Shell Fluxes and Discrete Conservation Structure}
\label{sec:shell_flux}

Define the inter-shell quadratic-energy flux from $S_\beta$ to $S_\alpha$ by
\[
\mathcal F_{\beta\to\alpha}(t)
:=
-\sum_{\substack{u\in S_\alpha\\ v\in S_\beta}}
g_v(t)\,g_u(t)\,\Omega_{v\to u}(t).
\]
Then equation~\eqref{eq:shell_energy_balance} becomes
\begin{equation}
\label{eq:shell_energy_balance_flux}
\boxed{
\frac{d}{dt}E_\alpha(t)
=
-\sum_{u\in S_\alpha}\lambda_u(t)\,g_u(t)^2
+
\sum_{\beta\neq\alpha}\mathcal F_{\beta\to\alpha}(t).
}
\end{equation}

Moreover, antisymmetry implies a strict action--reaction identity:
\[
\mathcal F_{\beta\to\alpha}(t)
=
-\mathcal F_{\alpha\to\beta}(t).
\]
Therefore, coupling conserves total quadratic energy across shells:
\[
\sum_\alpha \sum_{\beta\neq\alpha}\mathcal F_{\beta\to\alpha}(t)=0,
\]
and the only mechanism that decreases $\sum_\alpha E_\alpha(t)=\frac{1}{2}\|e_t\|_2^2$
is the dissipation term $-\sum_u \lambda_u g_u^2$.

\subsection{From Exact Shell Balance to Renormalized Dynamics}
\label{sec:shell_to_renorm}

The balance law~\eqref{eq:shell_energy_balance_flux} is an exact conservation-type
bookkeeping equation across logarithmic shells, requiring no continuum limit
and no PDE interpretation.
However, the fluxes $\mathcal F_{\beta\to\alpha}(t)$ remain nonlocal and depend
on microscopic details of the evolving operator.

In the remainder of the paper, we do not attempt to characterize
$\mathcal F_{\beta\to\alpha}$ microscopically.
Instead, Section~4 introduces a \emph{renormalizable shell-dynamics assumption}
that closes the cumulative effect of inter-shell fluxes at the shell level.
All subsequent results rely exclusively on this shell-level interface.

We emphasize that all subsequent results in this paper rely exclusively on
this shell-level energy balance and the renormalizable flux interface introduced
in Section~\ref{sec:renorm_shell}.
The continuum PDE description is used only as a convenient approximation
when shell resolution permits.

\section{Renormalizable Shell Dynamics and Effective Power-Law Transport}
\label{sec:renorm_shell}

Section~\ref{sec:mode_to_pde} provides an exact modewise ODE system and an effective spectral PDE description.
In this section we introduce an explicitly \emph{coarse-grained} interface for macroscopic analysis: logarithmic spectral shells, a renormalizable flux bookkeeping condition across shells, and a single effective transport assumption used in the remainder of the paper.
We emphasize that this section is not a microscopic characterization of all possible operators $M(t)$; rather, it formulates the minimal coarse-grained structure needed to derive self-similar spectral dynamics and scaling laws.


\subsection{Log-shell partition and quadratic shell energies}

Fix a ratio $q>1$ and define a logarithmic partition $\{\lambda_\alpha\}_{\alpha\in\mathbb{Z}}$ by
\[
\lambda_\alpha := \lambda_0 q^\alpha,
\qquad
S_\alpha := \{u:\ \lambda_u(t)\in[\lambda_\alpha,\lambda_{\alpha+1})\},
\qquad
S_{\le\alpha}:=\bigcup_{\gamma\le\alpha}S_\gamma.
\]
We work with the \emph{quadratic} shell energies introduced in Section~\ref{sec:shell_def}:
\[
E_\alpha(t):=\frac12\sum_{u\in S_\alpha} g_u(t)^2,
\qquad
E_{\le\alpha}(t):=\sum_{\gamma\le\alpha}E_\gamma(t).
\]
This choice is canonical because $\mathcal L(t)=\frac12\|e_t\|_2^2=\sum_\alpha E_\alpha(t)$.

\subsection{Exact shell bookkeeping and the role of dissipation}

Section~\ref{sec:shell_balance} already established the exact identity
\[
\frac{d}{dt}E_\alpha(t)
=
-D_\alpha(t) + \sum_{\beta\neq\alpha}\mathcal F_{\beta\to\alpha}(t),
\qquad
D_\alpha(t):=\sum_{u\in S_\alpha}\lambda_u(t)\,g_u(t)^2,
\]
where $\mathcal F_{\beta\to\alpha}(t)$ is the inter-shell quadratic-energy flux and satisfies
$\mathcal F_{\beta\to\alpha}(t)=-\mathcal F_{\alpha\to\beta}(t)$.
In particular, \emph{shell-internal coupling does not change $E_\alpha$} and the \emph{only} mechanism that decreases
$\sum_\alpha E_\alpha(t)$ is dissipation $\sum_\alpha D_\alpha(t)$.

\subsection{Definition: renormalizable shell dynamics via cumulative \emph{energy} flux}

The notion of renormalizability adopted here follows the standard usage in
statistical physics and field theory.
Rather than requiring microscopic locality or exact continuum limits, external interaction effects on a resolution shell are integrated out and absorbed into effective inter-shell fluxes, while subleading corrections become irrelevant at coarse scales.
This philosophy underlies Wilsonian renormalization group theory
\citep{wilson1983renormalization,kadanoff1966scaling}, shell models of turbulence and energy cascades
\citep{kolmogorov1995turbulence}, and effective hydrodynamic descriptions of
nonequilibrium systems \citep{forster1977large,spohn2012large}.
Our definition formalizes this principle at the level of spectral-shell energy
dynamics.

\begin{definition}[Renormalizable spectral-shell dynamics (energy form)]
\label{def:renorm_energy}
The modewise dynamics are said to be \emph{(weakly) renormalizable} with respect to the log-shell partition if,
for sufficiently large $\alpha$, the shell energies admit a closed balance of the form
\[
\label{eq:sec5_renorm_shell}
\frac{d}{dt}E_\alpha(t)
=
-D_\alpha(t)
+\mathcal{F}_\alpha^{\rm (net)}(t)
+\mathcal{R}_\alpha(t),
\]
where the coarse-grained interaction contribution admits a net-flux form
\[
\mathcal{F}_\alpha^{\rm (net)}(t)=J_{\le\alpha}(t)-J_{\le\alpha-1}(t),
\]
and the remainder is negligible compared to the leading dissipation:
\[
|\mathcal{R}_\alpha(t)| \ll D_\alpha(t)\qquad \text{as }\alpha\to\infty.
\]
Here $J_{\le\alpha}(t)$ is the cumulative \emph{quadratic-energy} flux across the shell boundary at $\alpha$,
defined by the exact microscopic inter-shell fluxes as
\[
\boxed{
J_{\le\alpha}(t)
:=
\sum_{\beta>\alpha}\ \mathcal F_{\beta\to(\le\alpha)}(t)
=
\sum_{\beta>\alpha}\ \sum_{\gamma\le\alpha}\mathcal F_{\beta\to\gamma}(t).
}
\]
Moreover, the cumulative flux is \emph{integrable at coarse scales} in the sense that for each shell $\alpha$ there exists $T_\alpha>0$ such that for all $T>T_\alpha$,
\[
\int_0^T |J_{\le\alpha}(t)|\,dt
\;\le\;
C \int_0^T D_\alpha(t)\,dt .
\]

\end{definition}

\begin{remark}[What the cumulative flux represents]
Definition~\ref{def:renorm_energy} does not assume microscopic locality of coupling.
All shells $\beta>\alpha$ are included in $J_{\le\alpha}$.
Renormalizability asserts only that after coarse-graining the net inter-shell exchange can be summarized
by a controlled boundary flux plus a negligible remainder.
\end{remark}

\begin{remark}[No direction is assumed]
Renormalizability imposes no restriction on the sign or direction of $J_{\le\alpha}(t)$.
All later results depend only on the existence of a controlled net-flux representation,
not on an \emph{a priori} cascade direction.
\end{remark}

\subsection{PDE-approximability across shells (energy continuum)}
When the shell spacing is sufficiently fine over a resolution range, we may introduce an energy density
$\varepsilon(\lambda,t)$ such that
\[
E_\alpha(t)\approx \int_{\lambda_\alpha}^{\lambda_{\alpha+1}}\varepsilon(\lambda,t)\,d\lambda,
\qquad
J_{\le\alpha}(t)-J_{\le\alpha-1}(t)\approx \int_{\lambda_\alpha}^{\lambda_{\alpha+1}}J(\lambda,t)\,d\lambda,
\]
This is a \emph{local} continuum approximation over that range; we do not require a global continuous-spectrum limit. As in nonequilibrium statistical physics \cite{forster1977large,spohn2012large}, we do not assume the existence of an exact continuum limit; rather, renormalizability refers to the existence of a closed effective description after coarse-graining. We stress again that our continuum equations do not assume the existence of a genuine continuous-spectrum limit. Under this approximation we acquire the expression of the total loss function:
\[
\mathcal L(t)\approx\int_{\lambda_{\min}}^{\lambda_{\max}} \varepsilon(\lambda,t)\,d\lambda.
\]
They represent a local approximation of the discrete shell-flux difference when shells are sufficiently dense over a dynamically relevant resolution range. Formally, this amounts to approximating the discrete net-flux difference $J_{\le\alpha}(t)-J_{\le\alpha-1}(t)$
by a first-order finite-difference representation of $\partial_\lambda J$ over a dense but finite shell range. This energy continumm yields, to leading order, the energy transport--dissipation equation
\begin{equation}
\label{eq:sec5_energy_pde}
\partial_t \varepsilon(\lambda,t)
+\partial_\lambda J(\lambda,t)
=
-2\lambda\,\varepsilon(\lambda,t),
\end{equation}
on the relevant resolution range.
The factor $2$ reflects the fact that dissipation in the mode ODE is $-\lambda_u g_u$,
hence quadratic energy dissipates as $-\frac{d}{dt}\frac12 g_u^2=\lambda_u g_u^2$.

\subsection{RMS amplitude notation}

For compatibility with earlier GSD/GRSD forms, we also define an RMS amplitude density
\begin{equation}
\label{eq:sec5_rms_def}
g(\lambda,t):=\sqrt{2\,\varepsilon(\lambda,t)}.
\end{equation}
Then
\[
\mathcal L(t)\approx\int_{\lambda_{\min}}^{\lambda_{\max}} \varepsilon(\lambda,t)\,d\lambda
=\int_{\lambda_{\min}}^{\lambda_{\max}} g(\lambda,t)^2\,d\lambda,
\]
and equation~\eqref{eq:sec5_energy_pde} can be viewed as the energy-level backbone behind the GRSD tail forms written in terms of $g$.

\subsection{Unified view of NTK and feature learning through spectral transport}

The shell-energy formulation provides a clean unification of lazy (NTK-like)
training and feature learning through a single macroscopic quantity:
the spectral drift velocity $v(\lambda,t)$. Recall the definition
\[
J(\lambda,t)=v(\lambda,t)\,g(\lambda,t),
\]
which defines the effective transport velocity $v(\lambda,t)$ in the coarse-grained description.

\paragraph{Lazy / NTK regime ($v\equiv 0$).}
If the kernel operator is effectively frozen during training ($\dot M(t)\approx 0$),
then $v(\lambda,t)\equiv 0$ and no inter-shell transport occurs.
The shell dynamics reduce to
\[
\frac{d}{dt}E_\alpha(t) = -D_\alpha(t),
\]
so each shell decays independently under dissipation.
No resolution frontier forms, and the dynamics coincide with classical NTK theory.

\paragraph{Feature learning regime ($v\neq 0$).}
When the representation evolves ($\dot M(t)\neq 0$), the drift velocity becomes
active and induces inter-shell energy transport.
Energy is redistributed across resolutions before being dissipated,
producing a moving resolution frontier and the GRSD tail described in
Sections~5--6.

Crucially, this redistribution preserves total quadratic energy and does not
increase training loss.
Its observable consequences depend on how the transported energy aligns with
the evaluation distribution (training vs.\ test).

\paragraph{Continuum of regimes.}
Between these extremes lies a continuum:
\begin{itemize}
\item If $|v(\lambda,t)|\ll \lambda$, dissipation dominates and training is
effectively NTK-like.
\item If $|v(\lambda,t)|$ is comparable to or larger than $\lambda$ over a
resolution range, transport reshapes the spectrum and feature learning emerges.
\end{itemize}

Thus, lazy training and feature learning are not distinct dynamical theories,
but limiting behaviors of the same renormalized shell-energy dynamics.

\paragraph{Interpretation.}
From this perspective, feature learning corresponds to \emph{spectral energy
transport induced by representation drift}, while NTK corresponds to the
degenerate zero-transport limit.
Both are unified within the same operator-theoretic framework.

\subsection{Summary}

The key structural takeaway is:

\begin{quote}
\emph{Shell-internal coupling is a rotation that preserves quadratic energy within each shell.
Any macroscopic evolution of shell energies is fully captured by (i) dissipation and (ii) inter-shell boundary flux.}
\end{quote}

All scaling-law consequences in later sections are derived from the renormalized boundary-flux interface
\eqref{eq:sec5_renorm_shell}--\eqref{eq:sec5_energy_pde} in a high-resolution range.

\section{Scale-Free GRSD Under Power-Law–Compatible Transport}
\label{sec:self_similar_grsd}

In this section we analyze the effective GRSD regime arising in an intermediate
scale-free training window. Empirically, modern deep models exhibit extended
intervals $t_0<t<T$ during which macroscopic spectral observables evolve as
straight lines in log--log coordinates, indicating the absence of intrinsic
scales.

Within such a window, the effective spectral transport must itself be
approximately scale-free. While the most general scale-free drift takes the
form $v(\lambda,t)=\phi(\lambda/t)$, requiring \emph{pure power-law temporal
scaling} of observables rules out generic $\phi$ and selects the linear
scale-free transport
\begin{equation}
\label{eq:sec6_v_c0}
v(\lambda,t)=c_0\,\frac{\lambda}{t},
\qquad t_0<t<T,
\end{equation}
up to subleading corrections.
In the remainder of this section we take \eqref{eq:sec6_v_c0} as an effective
model assumption and solve the resulting GRSD dynamics explicitly.

\paragraph{Why scale-free observables rigidly constrain the drift.}
We emphasize that the power-law form
\[
\label{sec5_characteristics}
v(\lambda,t)=c_0\,\frac{\lambda}{t}
\]
is not introduced as an arbitrary modeling choice.
Rather, it is rigidly selected by the requirement that macroscopic spectral observables
remain scale-free over an extended training window.

Empirically, in this regime, shell-resolved quantities such as the spectral energy density,
cumulative shell energies, and total loss evolve as straight lines in log--log coordinates
over multiple decades of time and resolution.
This behavior implies not merely scale invariance at a fixed time, but a joint
scale covariance between the spectral coordinate $\lambda$ and the training time $t$.

At the level of the coarse-grained shell dynamics, this covariance severely restricts
the admissible form of the effective transport velocity.
While the most general scale-free drift compatible with dimensional analysis can be written as
$v(\lambda,t)=\phi(\lambda/t)$, generic choices of $\phi$ induce nonlinear reparameterizations
of the spectral coordinate and therefore produce curvature in log--log observables.
Requiring that these observables remain asymptotically affine in log--log space
selects the linear form $\phi(u)=c_0 u$, yielding $v(\lambda,t)=c_0\lambda/t$
up to subleading corrections.

\paragraph{On the origin of the scale-free drift form.}
The specific choice
\(
v(\lambda,t)=c_0\,\lambda/t
\)
should not be interpreted as an ad hoc modeling assumption.
Under a set of strong but practically natural conditions—such as controlled
training trajectories, uniformly bounded gradients, and zero-mean random
initialization—the same scale-free drift form can be derived directly from the
training dynamics.
A concrete derivation under such conditions is given in
\citet{zhang2026doeslearningrenormalizesufficient}.
In the present work, we do not rely on these sufficient conditions and instead
take Eq. \eqref{sec5_characteristics} as a macroscopic consequence of scale-free spectral evolution;
the aforementioned derivation serves only to demonstrate that the assumed drift
is dynamically realizable rather than postulated.

\paragraph{Lagrangian spectral coordinates as the renormalized frame.}
The emergence of the Lagrangian spectral coordinate $u$
should not be viewed as a technical device introduced solely to solve the transport equation.
Instead, it represents the natural renormalized variable induced by the shell-level
flux interface.

At the discrete shell level, renormalizability asserts that the cumulative effect of
inter-shell interactions can be summarized by a controlled boundary flux.
In the continuum approximation, this structure manifests as a transport term that can
be absorbed into a time-dependent reparameterization of the spectral coordinate.
The characteristic flow Eq. \eqref{sec5_characteristics} implements precisely this absorption.

In the co-moving coordinate $u$, the dynamics reduce to pure dissipation.
All nontrivial macroscopic structure---including scaling laws and apparent emergence---
is therefore encoded in how this co-moving frame maps back to stationary spectral coordinates.
The simplicity of the resulting evolution is not accidental, but a direct consequence
of the renormalizable shell-dynamics assumption.

\subsection{Characteristic flow and scale-free reparameterization}

Starting from the leading-order GRSD equation
\begin{equation}
\partial_t \varepsilon(\lambda,t)
+\partial_\lambda\!\big(v(\lambda,t)\varepsilon(\lambda,t)\big)
=-2\lambda\,\varepsilon(\lambda,t),
\label{eq:sec6_grsd_pde}
\end{equation}
with drift \eqref{eq:sec6_v_c0}, the characteristic curves satisfy
\begin{equation}
\frac{d\lambda}{dt}
=c_0\,\frac{\lambda}{t}.
\label{eq:sec6_char}
\end{equation}
Integrating yields the exact scale-free flow
\begin{equation}
\lambda(t;u)
=u\Big(\frac{t}{t_0}\Big)^{c_0},
\qquad
u=\lambda\Big(\frac{t}{t_0}\Big)^{-c_0},
\label{eq:sec6_char_solution}
\end{equation}
where $u$ defines a stationary (Lagrangian) spectral coordinate.
This flow is strictly power-law and affine in log--log coordinates.

\subsection{Energy density along characteristics}
Along a characteristic $\lambda(t;u)$, the energy density evolves as
\begin{equation}
\frac{d}{dt}\log\varepsilon(\lambda(t),t)
=-2\lambda(t).
\end{equation}
Substituting \eqref{eq:sec6_char_solution} and integrating gives
\begin{equation}
\varepsilon(u,t)
=
\varepsilon_0(u)\,
\exp\!\Big[
-\frac{2u}{1+c_0}
\Big(
\Big(\frac{t}{t_0}\Big)^{1+c_0}-1
\Big)
\Big],
\qquad c_0\neq -1.
\label{eq:sec6_eps_lagrangian}
\end{equation}
which is aligned with the spectral evolution conjecture proposed in \cite{zhang2025generalized}:

\begin{equation}
    f^*-f_\lambda(t) = w_\lambda\exp(g(\lambda,t))
\end{equation}
where $g(\lambda,t)\propto  \lambda^{a(\beta)} t^{b(\beta)}$ and thus $w_\lambda^2$ correspond to the starting error on mode with eigenvalue of $\lambda$. This conjecture is helpful for understanding the scaling law for both loss evolution and model compression. It further predicted the existence of learning frontier and defines model density, which is observed to evolve according to \cite{Xiao2024DensingLaw}.  

Rewriting \eqref{eq:sec6_eps_lagrangian} in Eulerian coordinates using
$u=\lambda(t/t_0)^{-c_0}$ yields
\begin{equation}
\boxed{
\varepsilon(\lambda,t)
=
\varepsilon_0\!\big(\lambda\,t^{-c_0}\big)\,
\exp\!\Big(-\frac{2\lambda}{1+c_0}\,t\Big)
}.
\label{eq:sec6_eps_grsd}
\end{equation}

The exponential factor represents physical dissipation, while the prefactor
encodes pure scale-free transport. The competition between transport and
dissipation is governed entirely by the combination $1+c_0$.
This expression constitutes the leading-order GRSD solution in the scale-free
window.

At the level of logarithmic spectral shells, the high-resolution behavior of the
quadratic error energy can be expressed in a factorized form
\begin{equation}
E_\alpha(t)
\;\approx\;
A(\lambda_\alpha\,t^{-c_0})\,
\exp\!\big(-\kappa(t)\,\lambda_\alpha\big),
\qquad
\lambda_\alpha \ \text{beyond the resolution frontier},
\label{eq:shell_energy_factorized}
\end{equation}
where $\lambda_\alpha$ denotes the representative eigenvalue of shell $\alpha$,
$\kappa(t)$ is an effective dissipation scale, and
$A(\lambda,t)$ is a coarse-grained amplitude capturing the combined effects of
spectral transport and initialization.

\paragraph{Remark (Amplitude structure).}
The function $A(\lambda,t)$ is not assumed to take any specific form in general.
\emph{Particularly}, when the initial error spectrum is approximately scale-free
over a range of resolutions, the amplitude may inherit a corresponding power-law
dependence on $\lambda$:
\begin{equation}
\varepsilon(\lambda,t)
=
\big(\lambda\,t^{-c_0}\big)^p\,
\exp\!\Big(-\frac{2\lambda}{1+c_0}\,t\Big)
 = t^{-c_0p}\big(\lambda\big)^p\,
\exp\!\Big(-\frac{2\lambda}{1+c_0}\,t\Big)
 .
\label{eq:sec6_eps_grsd_power_law}
\end{equation}
which leads to a shell-energy profile of the representative form
\begin{equation}
E_\alpha(t)
\;\sim\;
C(t)\lambda_\alpha^{\,p+1}\,
\exp\!\big(-\kappa(t)\,\lambda_\alpha\big),
\qquad
\lambda_\alpha \ \text{beyond the frontier},
\label{eq:shell_energy_powerlaw_example}
\end{equation}
for some exponent $p$.
We emphasize that this expression is provided only as an illustrative example of how
scale-free initialization can manifest at the shell level; no assumption on the sign
or universality of $p$ is made, and our empirical analysis does not rely on this
specific functional form.

The total quadratic loss is
\begin{equation}
\mathcal{L}(t)=\int_{\lambda_{\min}}^{\lambda_{\max}}\varepsilon(\lambda,t) d\lambda.
\approx \int_{\lambda_{\min}}^\infty\varepsilon(\lambda,t) d\lambda.
\end{equation}
where the approximation holds for an exponential decay as lambda increases.
Starting from
\begin{equation}
\varepsilon(\lambda,t)
= (\lambda t^{-c_0})^{p}
\exp\!\Big(-\frac{2\lambda}{1+c_0}\,t\Big)
= t^{-c_0 p}\,\lambda^{p}\,e^{-a t\lambda},
\qquad a:=\frac{2}{1+c_0},
\end{equation}
the total quadratic error is
\begin{equation}
L(t)
\approx \int_{\lambda_{\min}}^\infty \varepsilon(\lambda,t)\,d\lambda
= t^{-c_0 p}\int_{\lambda_{\min}}^\infty \lambda^{p} e^{-a t\lambda}\,d\lambda .
\end{equation}
Performing the change of variables $u = a t \lambda$, we obtain
\begin{equation}
L(t)
= t^{-c_0 p}\,(a t)^{-(p+1)} 
\int_{a t \lambda_{\min}}^\infty u^{p} e^{-u}\,du = t^{-c_0 p}\,(a t)^{-(p+1)} \Gamma(1+p, a t \lambda_{\min}).
\end{equation}
In our theoretic analysis, the time $t\in (t_0,T)$ is in the scale-free range. For any $T$ such that $T \lambda_{\min} \ll \frac{1}{a}$, the remaining integral converges to a constant as $t$ varies, yielding the asymptotic scaling
\begin{equation}
L(t)
\;\propto\;
t^{-c_0 p}\, t^{-(p+1)}
=
t^{-\big[(1+c_0)p+1\big]} .
\end{equation}

\paragraph{Transport--dissipation competition and spectral front formation.}
The explicit solution reveals a universal structural feature:
the dynamics are governed by a competition between scale-free transport and
scale-dependent dissipation.
Transport alone redistributes error mass across resolutions without loss,
while dissipation selectively suppresses high-$\lambda$ components.

Their interplay generates a moving spectral front separating a learned region
from an unresolved tail.
Importantly, this front is not an artifact of a particular initialization,
architecture, or optimizer choice.
It is a robust consequence of renormalizable spectral transport under
power-law--compatible drift.

In the next section, we show that when training and evaluation are performed
on finite samples drawn from the same underlying distribution,
this moving front interacts with unavoidable high-resolution spectral mismatch.
The resulting transient amplification in certain spectral bands provides
a natural and entirely in-domain explanation of the double-descent phenomenon.

\section{Double Descent from In-Domain Finite-Sample Spectral Mismatch}
\label{subsec:double_descent_mismatch}
We now explain the origin of the double-descent phenomenon from the perspective
of renormalizable spectral-shell dynamics.
Crucially, the mechanism described here is entirely \emph{in-domain}:
training and test data are drawn independently from the same underlying
data-generating distribution.
The non-monotonicity of the test loss arises not from distribution shift, but
from unavoidable finite-sample spectral mismatch concentrated in high-resolution
modes.

\subsection{Training dynamics and monotone training loss.}
Throughout training, the error evolves according to
\[
\dot e_t=-M(t)e_t,
\qquad
M(t)=J_{\theta(t)}J_{\theta(t)}^{\!*}\succeq 0.
\]
Measured in the training metric, the quadratic loss
\[
\mathcal L_{\rm tr}(t)
:=\frac12\|e_t\|_{L^2(p)}^2
=\sum_\alpha E_\alpha(t),
\qquad
E_\alpha(t):=\frac12\sum_{u\in S_\alpha} g_u(t)^2,
\]
is strictly non-increasing:
\[
\frac{d}{dt}\mathcal L_{\rm tr}(t)
=-\langle e_t,M(t)e_t\rangle\le 0.
\]
This monotonicity is exact and does not rely on any coarse-graining,
continuum approximation, or weak-coupling assumption.

\subsection{In-domain train/test mismatch as a finite-sample effect.}
Let $P_{\rm tr}$ and $P_{\rm te}$ denote the empirical quadratic forms induced
by the training and test samples, respectively.
Both are unbiased estimators of the same population operator associated with
the underlying distribution, but differ due to finite sampling.
Define the mismatch operator
\[
\Delta := P_{\rm te}-P_{\rm tr}.
\]
The test loss can be written as
\[
\mathcal L_{\rm te}(t)
=\frac12\langle e_t,P_{\rm te}e_t\rangle
=\mathcal L_{\rm tr}(t)
+\frac12\langle e_t,\Delta e_t\rangle.
\]
The second term captures purely finite-sample effects and vanishes only in the
infinite-data limit.

For continuous data distributions or infinite-rank kernel operators, the
population spectrum exhibits vanishing eigenvalue gaps in the spectral tail.
Classical results on empirical covariance and integral operators show that, in
this regime, the associated empirical eigenspaces exhibit $O(1)$ fluctuations
under finite sampling
\citep{koltchinskii2017normal,rosasco2010learning}.
Consequently, even under in-domain sampling, $\Delta$ is generically supported
on high-resolution (spectral-tail) components and is not co-diagonalizable with
the training-induced operator $M(t)$.

\paragraph{Double descent as a generic finite-sample effect.}
We stress that the non-monotonic behavior described here does not rely on
distribution shift, adversarial sampling, or overparameterization per se.
It arises generically whenever the population operator exhibits a slowly decaying
spectral tail and learning proceeds through a transport-dominated regime.

From this perspective, double descent is not a pathological deviation from
monotone learning, but a predictable finite-sample manifestation of
renormalizable spectral transport.
As model capacity or dataset size increases, the location of the
mismatch-dominated spectral region shifts,
modifying the timing and magnitude of the transient amplification
without eliminating the underlying mechanism.

\subsection{Transport-induced amplitude reweighting}
\label{subsec:amplitude_reweighting}

Starting from the GRSD solution \eqref{eq:sec6_eps_grsd}, the spectral error
density admits the exact factorization
\begin{equation}
\varepsilon(\lambda,t)
=
A(\lambda,t)\,
\exp\!\Big(-\frac{2\lambda}{1+c_0}\,t\Big),
\qquad
A(\lambda,t):=\varepsilon_0(\lambda t^{-c_0}).
\label{eq:sec7_amp_def}
\end{equation}
The exponential factor represents pure dissipation and is strictly decreasing
in time. All non-monotonic behavior at fixed $\lambda$ is therefore encoded in
the amplitude function $A(\lambda,t)$, which arises solely from scale-free
transport.

\paragraph{Intuitive picture.}
Intuitively, scale-free transport continuously transfers error mass from
higher-$\lambda$ modes, which are learned and dissipated rapidly, toward
lower-$\lambda$ modes.
As a result, even as high-resolution errors decay, part of this error is
temporarily deposited onto lower-resolution components, raising their spectral
error density.

At the same time, dissipation at scale $\lambda$ is proportional to
$\lambda\,\varepsilon(\lambda,t)$, and is therefore weaker at small $\lambda$.
The mismatch between fast transport from high $\lambda$ and delayed dissipation
at low $\lambda$ naturally produces a transient amplification before eventual
decay.

Differentiating the amplitude function at fixed $\lambda$ yields
\begin{equation}
\partial_t A(\lambda,t)
=
-\,c_0\,\lambda\,t^{-c_0-1}\,
\varepsilon_0'(\lambda t^{-c_0}).
\label{eq:sec7_amp_derivative}
\end{equation}
Hence, whenever the initial spectrum is decreasing,
$\varepsilon_0'(u)<0$, and $c_0>0$, the amplitude grows in time:
\[
\partial_t A(\lambda,t)>0.
\]
Consequently, the spectral error density $\varepsilon(\lambda,t)$ can increase
transiently at fixed $\lambda$, despite overall dissipation and monotone decay
of the training loss.

\subsection{Double descent from transport-induced amplification in the mismatch window}
\label{sec:double_descent_mismatch}

We now formalize how transport-induced amplitude growth leads to
double descent when combined with finite-sample spectral mismatch.
The key point is that, although the training loss is strictly monotone,
the test loss probes a different quadratic form that selectively weights
high-resolution spectral components.

\paragraph{Spectral localization of finite-sample mismatch.}
Recall that the test loss can be written as
\[
L_{\mathrm{te}}(t)
=
L_{\mathrm{tr}}(t)
+
\frac12 \langle e_t, \Delta e_t \rangle,
\qquad
\Delta := P_{\mathrm{te}} - P_{\mathrm{tr}},
\]
where $\Delta$ is the mismatch operator.
Classical results on empirical covariance and integral operators imply that,
for continuous data distributions and infinite-rank population operators,
$\Delta$ is generically supported on high-resolution modes,
corresponding to small eigenvalues $\lambda$
of the population operator.

At the coarse-grained level, this implies that there exists a resolution
interval
\[
\lambda \in [\lambda_-, \lambda_+],
\]
which we refer to as the \emph{mismatch window},
such that the dominant contribution to
$\langle e_t, \Delta e_t \rangle$
comes from spectral components in this range.
Outside this window, either the modes are well-aligned between train and test
or their contribution is suppressed by dissipation.

\paragraph{Mismatch-weighted test loss and non-monotonicity.}
The mismatch contribution to the test loss can be approximated,
to leading order, by restricting to the mismatch window:
\[
\langle e_t, \Delta e_t \rangle
\;\approx\;
\int_{\lambda_-}^{\lambda_+}
w(\lambda)\,
\varepsilon(\lambda,t)\, d\lambda,
\]
where $w(\lambda)\ge 0$ denotes the coarse-grained spectral weight induced by the
finite-sample mismatch operator $\Delta=P_{\mathrm{te}}-P_{\mathrm{tr}}$ in the
instantaneous spectral basis of $M(t)$, so that $w(\lambda)\,d\lambda$ represents
the leading-order contribution of modes with eigenvalues in $[\lambda,\lambda+d\lambda]$
to the quadratic form $\langle e_t,\Delta e_t\rangle$. Formally, $w(\lambda)$ is defined by coarse-graining the diagonal spectral density
of the mismatch operator $\Delta$ in the instantaneous eigenbasis of $M(t)$,
i.e.
\[
w(\lambda)
\;:=\;
\sum_{u:\,\lambda_u(t)\in[\lambda,\lambda+d\lambda)}
\langle \phi_u(t),\Delta\,\phi_u(t)\rangle \,/\, d\lambda,
\]
so that $\langle e_t,\Delta e_t\rangle \approx \int w(\lambda)\,\varepsilon(\lambda,t)\,d\lambda$
to leading order.

Substituting the factorized form of $\varepsilon$ gives
\[
\int_{\lambda_-}^{\lambda_+}
w(\lambda)\,
A(\lambda,t)\,
\exp\!\left(
-\frac{2\lambda}{1+c_0}\,t
\right)
\, d\lambda.
\]

If the mismatch window $[\lambda_-,\lambda_+]$ intersects the region of
spectral coordinates for which $\partial_t A(\lambda,t)>0$,
then the integrand exhibits a competition between
transport-induced amplification and dissipation.
At early times, the growth of $A(\lambda,t)$ dominates,
leading to an increase of the mismatch contribution.
At later times, exponential dissipation takes over,
forcing eventual decay.

\paragraph{Sufficient condition for double descent.}
We therefore obtain the following sufficient condition:
if there exists a time interval and a non-negligible spectral region
\[
\lambda \in [\lambda_-,\lambda_+]
\quad \text{such that} \quad
\partial_t A(\lambda,t)>0,
\]
then the test loss $L_{\mathrm{te}}(t)$ is necessarily non-monotone,
even though the training loss $L_{\mathrm{tr}}(t)$ decreases monotonically.
This non-monotonicity manifests as the classical double-descent curve.

Importantly, this mechanism is entirely in-domain.
It does not rely on distribution shift or adversarial effects,
but follows generically from the interaction between
renormalizable spectral transport and finite-sample spectral mismatch.



\section{Empirical Spectral-Shell Analysis of LLM Training}
\label{sec:experiments}

This section empirically examines whether the coarse-grained spectral-shell
structures assumed by the GRSD framework are observable in realistic large
language model (LLM) training checkpoints. Our goal is not to infer microscopic
transport mechanisms or fit scaling exponents, but to validate the existence of
the shell-level structures required by the theory.

\subsection{Experimental setup and error definition}
\label{sec:exp_setup}

\paragraph{Models and checkpoints.}
We analyze publicly released checkpoints from the Pythia model family, including
models with 70M, 410M, and 7B parameters. Following the standard Pythia release
protocol, checkpoints are provided at regular intervals of approximately
1k optimization steps. All checkpoints analyzed in this section are taken
directly from the official releases; no selection based on loss values or
convergence criteria is performed.

\paragraph{Error observable.}
Let $q_\theta(y\mid x)$ denote the model-predicted conditional distribution.
Throughout this section, the error signal is defined as the negative
log-probability of the ground-truth token,
\begin{equation}
e(x) \;:=\; -\log q_\theta(y^*\mid x),
\end{equation}
and all spectral quantities are computed from the squared error $e(x)^2$.

\paragraph{Justification via quadratic approximation of KL.}
For language modeling with one-hot (or near one-hot) supervision, the ground-truth
conditional distribution satisfies $p(y^*\mid x)\approx 1$. In this setting, the
KL divergence reduces to the negative log-likelihood,
\begin{equation}
\mathrm{KL}(p\|q_\theta)
= -\log q_\theta(y^*\mid x) + \mathrm{const}.
\end{equation}
Moreover, when $q_\theta(y^*\mid x)\approx 1$, a second-order expansion yields the
local quadratic approximation
\begin{equation}
\mathrm{KL}(p\|q_\theta)
\;\approx\;
\tfrac12\,\big(-\log q_\theta(y^*\mid x)\big)^2.
\end{equation}
Accordingly, the squared log-probability provides a locally equivalent quadratic
error measure, consistent with the mean-squared-error framework underlying the
spectral-shell analysis. A more detailed derivation is provided in
Appendix~\ref{app:kl_mse}.

\subsection{Spectral estimation via stochastic Lanczos}
\label{sec:lanczos}

\paragraph{Motivation.}
The operator $M = J_\theta J_\theta^*$ governing error dynamics acts on a
high-dimensional function space, making explicit eigendecomposition infeasible
for modern LLMs. To estimate shell-resolved spectral quantities at scale, we
employ a stochastic Lanczos method, which probes the spectrum of $M$ using
operator--vector products without explicitly forming $M$.

\paragraph{Shell-energy estimation.}
The Lanczos procedure produces a low-dimensional tridiagonal approximation whose
spectral measure approximates the projection of the error onto the eigenmodes of
$M$. The resulting spectral weights are aggregated into logarithmic bins in the
spectral coordinate $\lambda$, yielding the log-shell--integrated quadratic error
energies
\begin{equation}
E_\alpha
\;=\;
\sum_{u:\,\lambda_u\in[\lambda_\alpha,\lambda_{\alpha+1})} g_u^2,
\end{equation}
which serve as the primary observables throughout this section. All reported
curves correspond to $\log E_\alpha$ plotted against the shell centers
$\log\lambda_\alpha$.

\paragraph{Data-induced averaging.}
To obtain stable shell-level estimates, all spectral quantities are averaged over
multiple independent data batches. Specifically, each estimate aggregates results
from approximately 50 batches, each consisting of 128 context windows with batch
size 4, corresponding to roughly $2.5\times 10^5$ tokens in total. No additional
stochasticity is introduced in the Lanczos procedure itself; all averaging arises
from the data distribution.

Implementation details and pseudocode for the Lanczos-based shell-energy
estimation are provided in Appendix~\ref{app:lanczos}.

\subsection{Shell-energy structure at a fixed checkpoint}
\label{sec:fixed_ckpt}

\begin{figure}[t]
    \centering
    \begin{subfigure}[t]{0.32\textwidth}
        \centering
        \includegraphics[width=\textwidth]{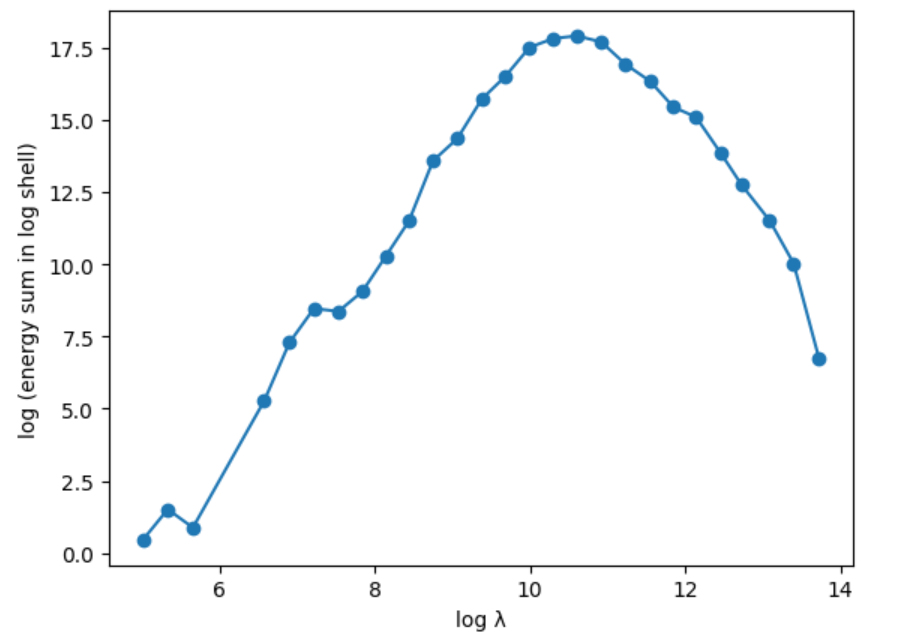}
        \caption{Shell-energy profile at a fixed checkpoint.}
        \label{fig:7b_shell_profile}
    \end{subfigure}
    \hfill
    \begin{subfigure}[t]{0.32\textwidth}
        \centering
        \includegraphics[width=\textwidth]{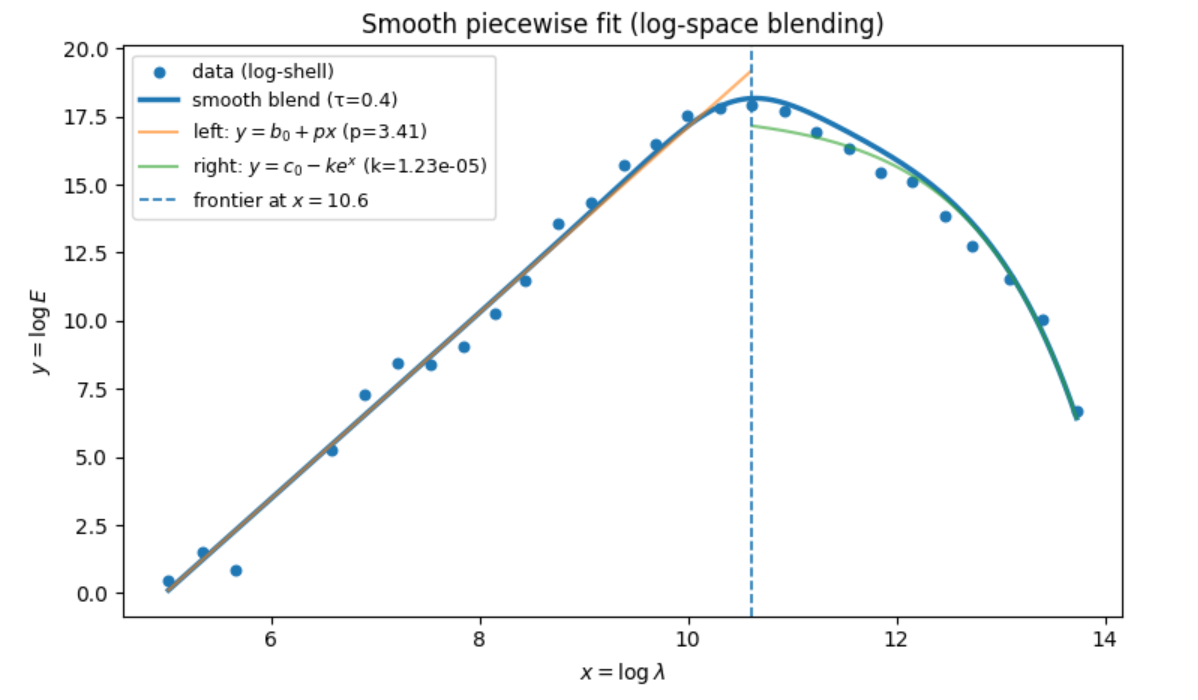}
        \caption{Piecewise fit with smooth blending.}
        \label{fig:7b_piecewise_fit}
    \end{subfigure}
    \hfill
    \begin{subfigure}[t]{0.32\textwidth}
        \centering
        \includegraphics[width=\textwidth]{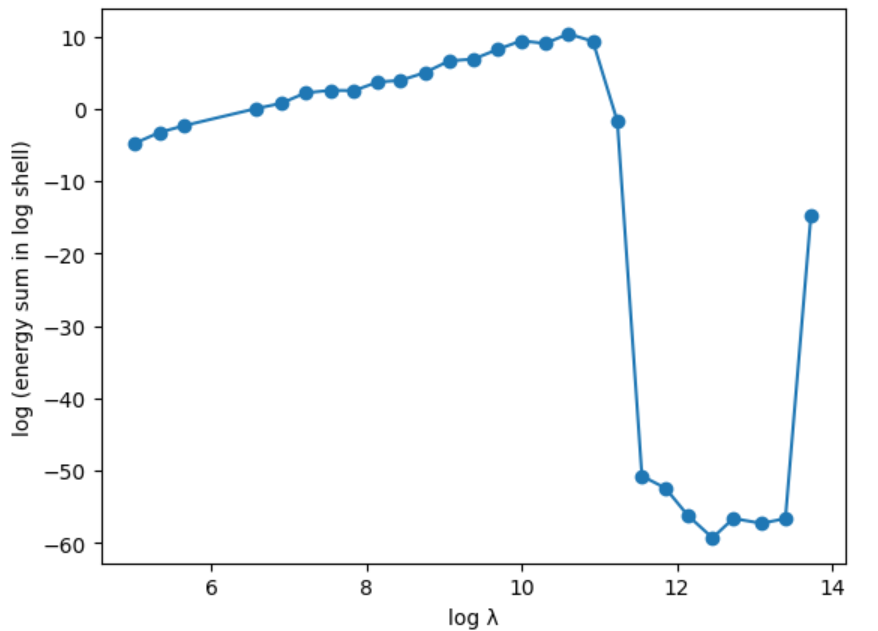}
        \caption{Minimum shell energy.}
        \label{fig:7b_min_energy}
    \end{subfigure}
    \caption{
    Spectral-shell structure of log-probability error for the 7B Pythia model at a representative checkpoint.
    \textbf{(a)} Minimum energy within each log shell exhibits an abrupt collapse beyond a finite resolution,
    indicating a sharp dissipation-dominated cutoff rather than an extended power-law tail.
    \textbf{(b)} A piecewise model---log-linear (power-law) behavior on the low-resolution side and exponential
    suppression beyond the spectral frontier---provides a good qualitative description of the observed profile.
    The two regimes are combined via a smooth log-space blending for visualization.
    \textbf{(c)} The full log-shell energy distribution forms a robust unimodal profile, with a well-defined
    spectral frontier separating transport-dominated and dissipation-dominated regimes.
    }
    \label{fig:7b_shell_structure}
\end{figure}

\paragraph{Theoretical expectation.}
As derived in Eq~\ref{eq:shell_energy_powerlaw_example}, the shell-integrated quadratic
error energy admits a factorized form beyond a finite resolution frontier,
\begin{equation}
E_\alpha(t)
\;\approx\;
A(\lambda_\alpha,t)\,
\exp\!\big(-\kappa(t)\,\lambda_\alpha\big),
\qquad
\lambda_\alpha \ \text{beyond the frontier},
\end{equation}
where $A(\lambda,t)$ is a coarse-grained amplitude inherited from transport and
initial conditions. In particular, when the initial error spectrum is
approximately scale-free over a range of resolutions, the amplitude
$A(\lambda,t)$ may exhibit an effective power-law dependence on $\lambda$ over
that range.

These observations imply the existence of a well-defined \emph{resolution
frontier} separating two qualitatively distinct regimes. On the low-$\lambda$
side of the frontier, the shell-integrated error energy varies smoothly across
logarithmic shells and is well approximated by an affine trend in log--log
coordinates, indicating a slowly varying, transport-dominated structure. As
$\lambda$ increases, the energy profile reaches a single pronounced maximum,
beyond which the behavior changes abruptly.

At fixed training checkpoints, the log-shell energy $E_\alpha$ consistently
exhibits a unimodal distribution across resolutions: energy accumulates gradually
from low $\lambda$, peaks at an intermediate scale, and then decays rapidly at
higher resolutions. To probe the high-resolution regime more directly, we
examine the minimum error energy within each log shell. Beyond a finite
resolution threshold, the minimum shell energy collapses sharply to a numerical
floor, producing a clear cutoff on the right side of the spectrum. This abrupt
suppression is incompatible with any extended power-law tail and instead
indicates a dissipation-dominated regime past the spectral frontier.

\subsection{Comparison across checkpoints: spectral-front displacement}
\label{sec:ckpt_compare}

\begin{figure}[t]
    \centering

    \begin{subfigure}[t]{0.48\textwidth}
        \centering
        \includegraphics[width=\textwidth]{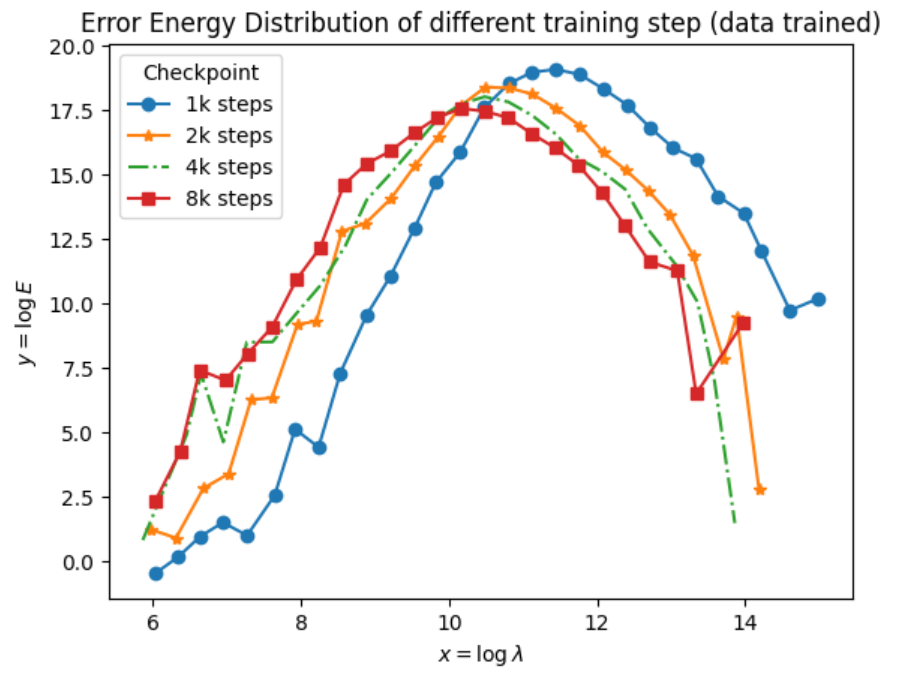}
        \caption{Raw log-shell energy profiles.}
        \label{fig:ckpt_compare_raw}
    \end{subfigure}
    \hfill
    \begin{subfigure}[t]{0.48\textwidth}
        \centering
        \includegraphics[width=\textwidth]{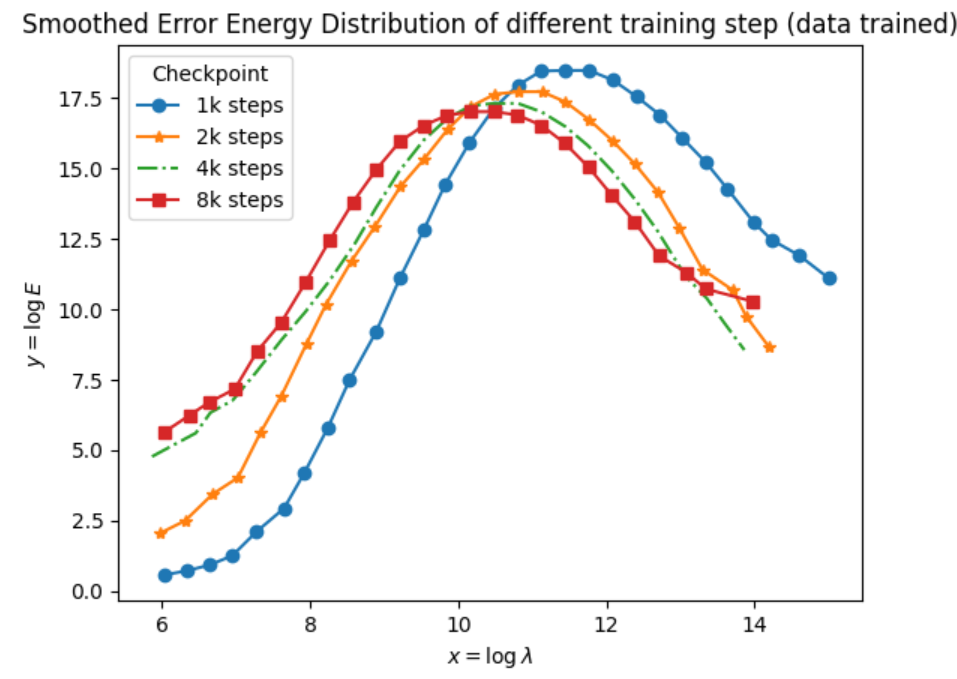}
        \caption{Smoothed log-shell energy profiles.}
        \label{fig:ckpt_compare_smooth}
    \end{subfigure}

    \caption{
    Comparison of log-shell error energy distributions across training checkpoints
    (1k, 2k, 4k, and 8k steps) for a fixed model.
    \textbf{(a)} Raw shell-energy profiles exhibit a consistent unimodal shape,
    while the location of the spectral peak and the extent of the high-resolution
    tail vary systematically across checkpoints.
    \textbf{(b)} Smoothed profiles suppress shell-level fluctuations and more
    clearly reveal a leftward displacement of the spectral peak and a contraction
    of the high-resolution tail as training progresses. Across checkpoints, the
    low-$\lambda$ side of the spectrum remains approximately parallel in log--log
    coordinates, indicating coherent reweighting of shell energies rather than
    independent decay at fixed resolutions.
    }
    \label{fig:ckpt_compare}
\end{figure}

We next compare log-shell energy profiles across multiple training checkpoints
for a fixed model. As shown in Fig.~\ref{fig:ckpt_compare}, the shell-energy
distributions at different checkpoints all retain a pronounced unimodal shape,
indicating a stable coarse-grained spectral organization throughout training.
However, the location of the peak and the extent of the high-resolution tail
shift systematically across checkpoints.

In particular, as training progresses, the position of the spectral peak moves
toward smaller $\lambda$, and the high-resolution side of the distribution
contracts accordingly. This behavior is more clearly revealed in the smoothed
profiles, which suppress shell-level noise while preserving the global shape.
The observed shift reflects a gradual displacement of the effective resolution
frontier rather than the emergence of qualitatively new spectral features.

At the same time, the low-$\lambda$ side of the spectrum exhibits coherent
reweighting across checkpoints. Over a broad range of low resolutions, the
shell-energy curves remain approximately parallel in log--log coordinates,
while their overall amplitudes increase with training. This indicates that error
energy is redistributed across spectral shells in a correlated manner, rather
than decaying independently at fixed resolutions.

\subsection{Cross-model consistency across scales}
\label{sec:cross_model}
\begin{figure}[t]
    \centering

    \begin{subfigure}[t]{0.48\textwidth}
        \centering
        \includegraphics[width=\textwidth]{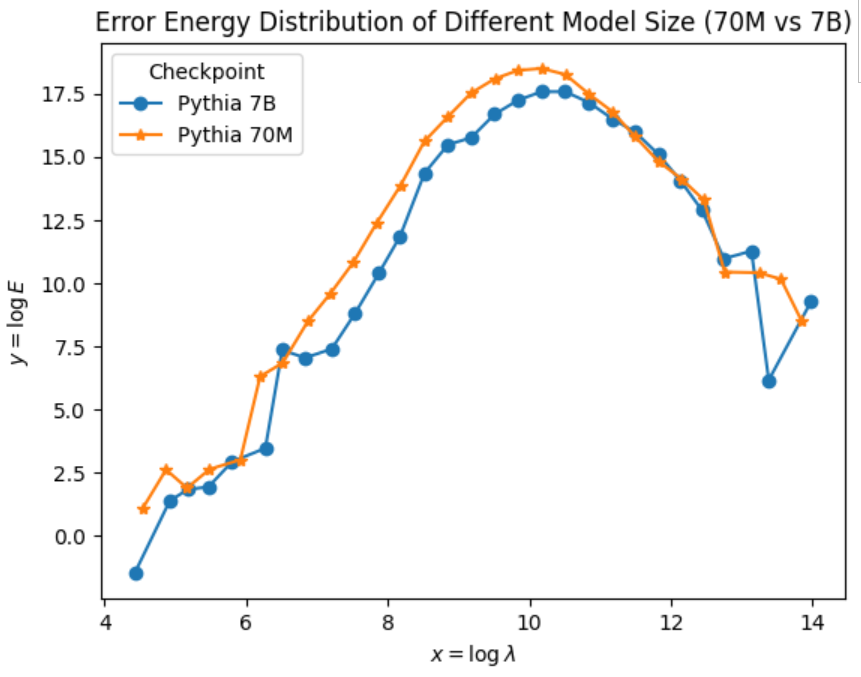}
        \caption{Raw log-shell energy profiles.}
        \label{fig:cross_model_raw}
    \end{subfigure}
    \hfill
    \begin{subfigure}[t]{0.48\textwidth}
        \centering
        \includegraphics[width=\textwidth]{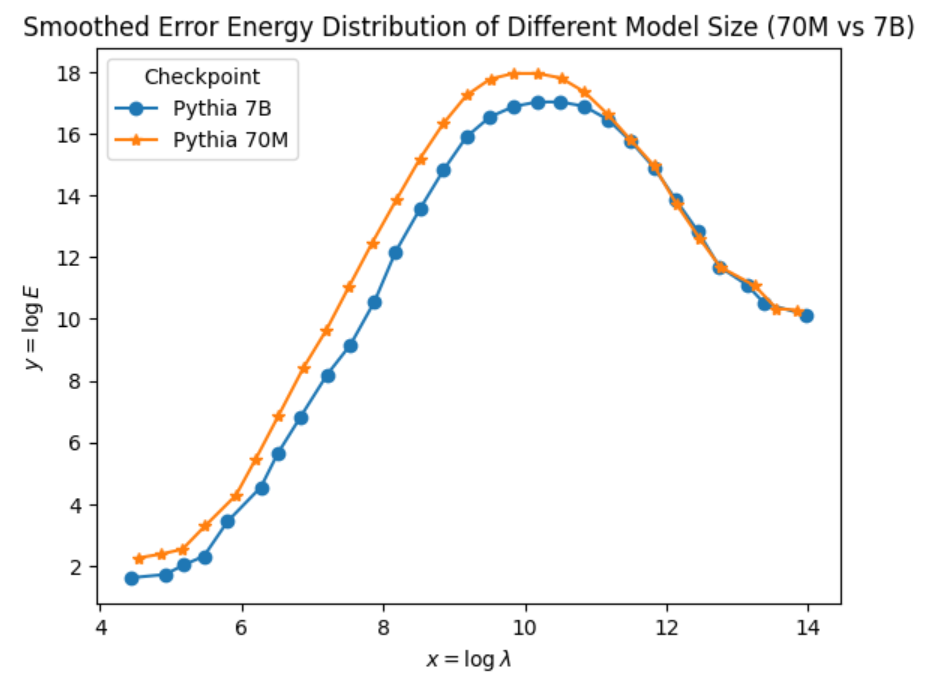}
        \caption{Smoothed log-shell energy profiles.}
        \label{fig:cross_model_smooth}
    \end{subfigure}

    \caption{
    Cross-model comparison of log-shell error energy distributions for Pythia
    models with 70M and 7B parameters at the same checkpoint index.
    \textbf{(a)} Raw shell-energy profiles show that both models exhibit a similar
    unimodal structure, with energy rising from low $\lambda$, peaking at an
    intermediate resolution, and decaying at higher resolutions.
    \textbf{(b)} Smoothed profiles suppress shell-level fluctuations and highlight
    the close alignment of the two distributions across a broad range of
    resolutions. Differences between model scales appear primarily as overall
    amplitude shifts and modest displacements of the spectral peak, while the
    low-$\lambda$ side of the spectrum remains approximately parallel in log--log
    coordinates.
    }
    \label{fig:cross_model}
\end{figure}

We finally compare shell-energy profiles across models of different scales.
Fig.~\ref{fig:cross_model} shows the log-shell error energy distributions for
Pythia models with 70M and 7B parameters at the same checkpoint index. Despite
the substantial difference in model size, the two profiles exhibit strikingly
similar coarse-grained structure.

In both cases, the shell-energy distribution forms a pronounced unimodal shape,
with energy rising smoothly from low $\lambda$, peaking at an intermediate
resolution, and decaying at higher resolutions. The smoothed profiles further
highlight this similarity: across a broad range of resolutions, the two curves
closely track each other, differing primarily by an overall amplitude shift and
a modest displacement of the peak location.

Notably, the low-$\lambda$ side of the spectrum remains approximately parallel in
log--log coordinates across model scales, indicating that the relative
distribution of error energy across coarse spectral shells is largely preserved.
Differences between the 70M and 7B models manifest mainly in the overall scale of
energy and the precise position of the spectral peak, rather than in the
emergence of qualitatively new spectral features.

This cross-model consistency suggests that the observed shell-level organization
reflects a scale-robust property of training dynamics, rather than an artifact of
a particular model size.

\subsection{Summary of empirical observations}
\label{sec:exp_summary}

In summary, our experiments establish the following empirical facts:
(i) squared log-probability error admits a unimodal shell-energy profile at fixed
checkpoints; (ii) a sharp high-resolution frontier exists beyond which shell
energy is rapidly suppressed; (iii) the frontier position and low-resolution
amplitudes vary coherently across checkpoints; and (iv) these shell-level
structures persist across model scales from 70M to 7B parameters.

These observations support the structural assumptions underlying renormalizable
spectral-shell dynamics without relying on any specific microscopic transport
model.

\section{Related Work}

\paragraph{Neural scaling laws.}
Empirical studies have established that neural networks display remarkably
regular power-law relationships between compute, model size, dataset size,
and achievable loss \citep{kaplan2020scaling, henighan2020scaling, 
hestness2017deep, hoffmann2022training, hernandez2021scaling}.
Recent work has investigated both the emergence of such scaling behavior and
its theoretical underpinnings, including dynamical models of loss evolution
\citep{bordelon2024dynamical} and the role of data pruning 
\citep{sorscher2022beyond}.
Precision scaling laws have also been explored in the context of
architectural and quantization constraints \citep{kumar2024scaling}.

\paragraph{Feature learning and spectral bias.}
A major line of work studies how neural networks acquire hierarchical
representations and exhibit spectral preference 
\citep{rahaman2019spectral, bordelon2023feature}.
Kernel-based analyses of learning dynamics 
\citep{bietti2021inductive, canatar2022kernel} 
and infinite-width approximations 
\citep{jacot2018neural, lee2019wide, yang2021tensor} 
have contributed significantly to understanding the transition between lazy
training and representation learning.  
Recent studies have also revealed consistency of learned features across widths
\citep{vyas2023feature} and the spectral evolution of networks
\citep{wang2023spectralevolution}.

\paragraph{Optimization dynamics and stability.}
The geometry of optimization landscapes, the effect of batch size, and
training stability have been examined extensively
\citep{keskar2017large, ghorbani2019investigation}.
Work on pruning and compression
\citep{rosenfeld2021predictability, han2015deep, blalock2020state,
lecun1990optimal, molchanov2017variational, lee2019snip, wang2020picking,
han2016deep, nagel2021up, frantar2022gptq, dettmers2022llm}
has illuminated how spectral structure interacts with parameter sparsity and
low-precision computation.
Theoretical connections between spectral evolution, implicit bias, 
and neural dynamics continue to be an active area of research
\citep{domine2024lazy, zhang2025generalized}.

\paragraph{Perturbation theory and adiabatic analysis.}
Our spectral formulation draws on classical operator perturbation theory
\citep{kato2012short} and its analogues in quantum adiabatic evolution
\citep{Zwiebach_Adiabatic}.
These tools formalize how eigenfunctions and eigenvalues evolve under 
smooth or weakly coupled updates, providing a principled foundation for the 
drift–dissipation dynamics developed in this work.

\section{Conclusion}

This work proposed a spectral--shell framework for understanding neural
scaling laws, feature learning, and double-descent phenomena directly from
the operator-level dynamics induced by gradient descent.
Starting from the exact function-space evolution
$\dot e_t=-M(t)e_t$, we derived a rigorous modewise formulation and showed
that, upon logarithmic coarse-graining, shell-internal interactions cancel
identically at the level of quadratic error energy.
As a result, the macroscopic evolution of error is governed entirely by
dissipation and inter-shell energy exchange.

The central modeling ingredient of this paper is a \emph{renormalizable
spectral-shell dynamics} assumption: after coarse-graining, the cumulative
effect of microscopic interactions can be summarized by a controlled net
energy flux across shell boundaries.
Under an effective power-law form of this renormalized shell-level flux,
the shell dynamics admit a self-similar high-resolution tail with a moving
resolution frontier.
This structure yields explicit scaling-law behavior for the total loss and
provides a unified explanation of neural scaling laws and double descent.

A key conceptual outcome of this framework is a unified view of lazy
(NTK-like) training and feature learning.
When inter-shell transport is negligible, shell energies decay independently
under dissipation, recovering classical kernel dynamics.
When transport is active, spectral energy is redistributed across resolutions
before being dissipated, inducing representation learning.
Both regimes—and the continuum between them—are governed by the same
shell-energy bookkeeping, differing only in the effective shell-flux
interface.

Importantly, the analysis does not assume the existence of a continuous
spectrum or a vanishing shell spacing.
All continuum partial differential equations appearing in the paper serve
only as convenient local approximations of discrete shell-energy dynamics
over resolution ranges where shells are sufficiently dense.
All scaling-law conclusions can be equivalently interpreted at the discrete
shell level.

\paragraph{Future directions.}
Several important questions remain open:

\begin{enumerate}
\item \textbf{Multi-task and multi-distribution learning.}
  The present analysis focuses on a single task and a single data
  distribution.
  Extending spectral-shell dynamics to multi-task settings—where different
  task operators may not share eigenbases and may interact through shared
  representation drift—could reveal new mechanisms of transfer, interference,
  and modular generalization.

\item \textbf{Origin of renormalizability and power-law shell transport.}
  In this work, the renormalizable shell-flux interface and its effective
  power-law form are treated as macroscopic assumptions.
  A natural next step is to understand their qualitative and quantitative
  origins from operator-level structure.
  Possible mechanisms include locality of spectral coupling, effective
  one-directional energy transfer across resolution shells, and the
  suppression of long-range interactions by stochastic optimization noise.
  Clarifying when and why such mechanisms produce renormalizable and
  power-law shell dynamics would substantially deepen the theoretical
  foundations of the framework.

\item \textbf{Modeling optimizers and learning-rate schedules.}
  The present formulation is developed for gradient-based optimization with
  an effective time reparameterization.
  Momentum, adaptive methods, and second-order or approximate natural-gradient
  schemes can modify both dissipation and transport.
  Developing spectral-shell descriptions of these optimizers—and of common
  learning-rate schedules—may explain optimizer-dependent variations in
  scaling behavior.

\item \textbf{Connecting $J_\theta$ to network architecture.}
  The Jacobian operator $J_\theta$ mediates how parameters generate spectral
  structure in function space.
  Understanding how architectural features such as depth, width, and
  parameterization shape the induced shell dynamics could connect model
  scaling, representational capacity, and achievable loss within a unified
  operator-theoretic framework.
\end{enumerate}

Overall, the spectral-shell perspective developed here isolates a minimal and
robust macroscopic structure underlying neural training dynamics.
By separating exact shell-level conservation laws from coarse-grained flux
assumptions, it provides a flexible foundation for understanding scaling
phenomena while leaving room for future work on their microscopic origins.

\bibliographystyle{plainnat}
\bibliography{references}

\appendix

\section{KL-to-MSE justification for log-probability error}
\label{app:kl_mse}

This appendix justifies the quadratic error observable used in the empirical
spectral-shell analysis. In language modeling with one-hot supervision, the
ground-truth conditional distribution is concentrated at the correct token
$y^*$, while the model predicts $q_\theta(\cdot\mid x)$.
For a fixed context $x$, the token-level KL divergence satisfies
\[
\mathrm{KL}\!\big(p(\cdot\mid x)\,\|\,q_\theta(\cdot\mid x)\big)
= \sum_y p(y\mid x)\log\frac{p(y\mid x)}{q_\theta(y\mid x)}
= -\log q_\theta(y^*\mid x) + C(x),
\]
where $C(x)$ depends only on $p(\cdot\mid x)$ and is independent of $\theta$.

Define the (log-probability) error observable
\[
e(x) \;:=\; -\log q_\theta(y^*\mid x).
\]
When the model is already assigning high probability to the correct token,
$q_\theta(y^*\mid x)\approx 1$, we have $e(x)\approx 0$ and a local quadratic
approximation is appropriate. Consider $q=\exp(-e)$, so that $q\approx 1$ iff
$e\approx 0$. Expanding $\exp(-e)$ around $e=0$ gives
\[
q = e^{-e} = 1 - e + \tfrac12 e^2 + O(e^3),
\qquad\Rightarrow\qquad
1-q = e + O(e^2).
\]
Thus, in a neighborhood of $q=1$, any smooth loss functional that is minimized at
$q=1$ admits a second-order expansion in $e$. In particular, the KL divergence
above equals $e$ up to an additive constant, and its \emph{local quadratic surrogate}
takes the form
\[
\mathrm{KL}(p\|q_\theta) \;\approx\; \tfrac12\, e(x)^2
\qquad\text{for } e(x)\ll 1 \; (q_\theta(y^*\mid x)\approx 1).
\]
Therefore, using the squared log-probability $e(x)^2$ as a mean-squared error
observable is consistent with the quadratic-loss framework that underlies the
spectral-shell bookkeeping: the induced operator dynamics and shell-energy
definitions apply directly once the error signal is taken to be $e(x)$ and the
energy density is built from $e(x)^2$.

\paragraph{Remark (what is measured empirically).}
All empirical spectra in this paper are computed from the squared error
$e(x)^2 = \big(-\log q_\theta(y^*\mid x)\big)^2$ and then aggregated into
logarithmic spectral shells.


\section{Spectral estimation via (stochastic) Lanczos and shell binning}
\label{app:lanczos}

This appendix describes the Lanczos-based estimator used to obtain log-shell
energies without forming the operator $M=J_\theta J_\theta^*$ explicitly.

\subsection{Goal: shell energies from the spectral measure of $M$}
Fix a checkpoint $\theta$ and let $M := J_\theta J_\theta^*$ be the PSD operator
governing error dynamics in function space. Let $e$ denote the error signal
(Section~\ref{sec:setup}; in our experiments $e(x)=-\log q_\theta(y^*\mid x)$).
Formally diagonalizing $M\phi_u=\lambda_u\phi_u$ and expanding $e=\sum_u g_u\phi_u$,
the shell-integrated energy is
\[
E_\alpha \;:=\; \sum_{u:\,\lambda_u\in[\lambda_\alpha,\lambda_{\alpha+1})} g_u^2.
\]
Equivalently, define the spectral measure $\mu_e$ of $M$ induced by $e$ so that
for any test function $f$,
\[
\langle e, f(M)e\rangle = \int f(\lambda)\, d\mu_e(\lambda),
\qquad
d\mu_e(\lambda) = \sum_u g_u^2\,\delta(\lambda-\lambda_u)\,d\lambda.
\]
Then $E_\alpha$ is exactly the mass of $\mu_e$ in the bin
$[\lambda_\alpha,\lambda_{\alpha+1})$.

\subsection{Lanczos tridiagonalization and quadrature}
Lanczos constructs, from repeated applications of $M$ to vectors, an orthonormal
basis of the Krylov subspace $\mathcal{K}_m(M,v_0)=\mathrm{span}\{v_0,Mv_0,\dots,M^{m-1}v_0\}$
and a symmetric tridiagonal matrix $T_m\in\mathbb{R}^{m\times m}$ such that
\[
M V_m \;\approx\; V_m T_m,
\qquad
V_m=[v_0,\dots,v_{m-1}],\quad V_m^*V_m=I.
\]
We choose the starting vector to be the normalized error direction
\[
v_0 := \frac{e}{\|e\|}.
\]
Let $T_m = U \,\mathrm{diag}(\tilde\lambda_i)\, U^*$ be the eigendecomposition of
the tridiagonal matrix. Then Gauss quadrature yields an approximation of the
spectral measure $\mu_e$ by the discrete measure
\[
\mu_e(d\lambda)\;\approx\;\|e\|^2 \sum_{i=1}^m w_i\,\delta(\lambda-\tilde\lambda_i)\,d\lambda,
\qquad
w_i := (U_{1i})^2,
\]
where $U_{1i}$ is the first component of the $i$-th eigenvector of $T_m$.

\subsection{Log-shell binning}
Given logarithmic bin edges $\{\lambda_\alpha\}_{\alpha=0}^{B}$, define the estimated
shell energies by accumulating quadrature masses:
\[
\widehat E_\alpha
\;:=\;
\|e\|^2 \sum_{i:\,\tilde\lambda_i\in[\lambda_\alpha,\lambda_{\alpha+1})} w_i.
\]
We report $\log \widehat E_\alpha$ as a function of $\log\lambda_\alpha^{\mathrm{ctr}}$,
where $\lambda_\alpha^{\mathrm{ctr}}$ is the geometric bin center.

\subsection{Data-induced averaging (no extra randomness)}
We do not rely on a global random seed. Variance reduction is achieved by
averaging the resulting shell estimates across many independent data batches:
each batch provides an empirical error vector $e$ (computed from a large set of
tokens), and the above Lanczos+binning pipeline produces $\widehat E_\alpha$ for
that batch; we then average $\widehat E_\alpha$ over batches.

\subsection{Pseudo-code}
\begin{algorithm}[t]
\caption{Lanczos shell-energy estimation at a fixed checkpoint}
\label{alg:lanczos_shell_energy}
\begin{algorithmic}[1]
\Require Operator-vector product routine $u\mapsto Mu$; error vector $e$;
Lanczos steps $m$; log-shell edges $\{\lambda_\alpha\}_{\alpha=0}^{B}$.
\Ensure Estimated shell energies $\widehat E_\alpha$.
\State $v_0 \leftarrow e/\|e\|$; $\beta_0\leftarrow 0$; $v_{-1}\leftarrow 0$
\For{$k=0,\dots,m-1$}
    \State $w \leftarrow Mv_k - \beta_k v_{k-1}$
    \State $\alpha_k \leftarrow \langle v_k,w\rangle$
    \State $w \leftarrow w - \alpha_k v_k$
    \State $\beta_{k+1} \leftarrow \|w\|$
    \If{$\beta_{k+1}=0$} \textbf{break} \EndIf
    \State $v_{k+1} \leftarrow w/\beta_{k+1}$
\EndFor
\State Form tridiagonal $T$ with diagonal entries $\alpha_k$ and off-diagonals $\beta_{k+1}$
\State Compute eigendecomposition $T = U\,\mathrm{diag}(\tilde\lambda_i)\,U^*$
\State Set weights $w_i \leftarrow (U_{1i})^2$
\State Initialize $\widehat E_\alpha \leftarrow 0$ for all $\alpha=0,\dots,B-1$
\For{$i=1,\dots,m$}
    \State Find $\alpha$ such that $\tilde\lambda_i\in[\lambda_\alpha,\lambda_{\alpha+1})$
    \If{such $\alpha$ exists}
        \State $\widehat E_\alpha \leftarrow \widehat E_\alpha + \|e\|^2\, w_i$
    \EndIf
\EndFor
\State \Return $\{\widehat E_\alpha\}_{\alpha=0}^{B-1}$
\end{algorithmic}
\end{algorithm}

\paragraph{Implementation note.}
The only required primitive is the product $u\mapsto Mu = J_\theta(J_\theta^* u)$.
In modern autodiff frameworks this can be implemented via a VJP followed by a JVP,
without constructing $J_\theta$ or $M$ explicitly.

\end{document}